\documentclass[11pt]{article}
\usepackage{amsmath,amsfonts,amssymb,amsthm}

\usepackage{fullpage}
\usepackage{hyperref}
\usepackage{bm}
\usepackage{comment}
\usepackage{booktabs}
\usepackage{multirow}
\usepackage{algorithm}
\usepackage{algorithmic}

\usepackage{amsmath}
\usepackage{amssymb}
\usepackage{mathtools}
\usepackage{amsthm}
\usepackage{hyperref}

\usepackage{algorithm}
\usepackage{algorithmic}
\usepackage[capitalize,noabbrev]{cleveref}

\newtheorem{theorem}{Theorem}

\newtheorem{lemma}{Lemma}

\newtheorem{assumption}{Assumption}

\newtheorem{remark}{Remark}

\newcommand{\R}{\mathbb{R}}

\newcommand{\E}{\mathbb{E}}

\newcommand{\KL}{\mathrm{KL}}

\newcommand{\mmse}{\mathrm{mmse}}

\newcommand{\eps}{\epsilon}

\newcommand{\norm}[1]{\left\lVert #1 \right\rVert}

\newcommand{\ip}[2]{\left\langle #1,#2\right\rangle}

\newcommand\blue {\textcolor{blue}}

\usepackage{biblatex} 
\title{When Diffusion Model Can Ignore Dimension: An Entropy-Based Theory}

\author{
  Ahmad Aghapour \thanks{Department of Mathematics, University of Michigan, Ann Arbor, 48109; email: aghapour@umich.edu. }
  \and
  Erhan Bayraktar \thanks{Department of Mathematics, University of Michigan, Ann Arbor, 48109; email: erhan@umich.edu. }
}
\date{}
\begin{document}
\maketitle

\begin{abstract}
Diffusion models perform remarkably well on high-dimensional data such as images, often using only a modest number of reverse-time steps. Despite this practical success, existing convergence theory does not fully explain why such samplers remain efficient in high dimensions. Many prior KL guarantees bound the discretization error in terms of the ambient dimension, while other improved results replace this dependence using intrinsic-dimensional or geometric structure assumptions. In this work, we develop an alternative information-theoretic perspective on diffusion sampler convergence. We prove that, for Gaussian mixture targets, the discretization error is controlled by the Shannon entropy of the latent mixture component rather than by the ambient dimension. Consequently, the leading step complexity scales linearly with latent entropy and depends only logarithmically on the second moment of the data. Our analysis also extends to discrete target distributions, where the relevant complexity is the entropy of the target rather than the dimension of the embedding space. These results suggest that diffusion sampling can remain efficient in high-dimensional spaces when the data distribution admits a compact latent representation, as is widely believed to be the case for natural images.
\end{abstract}

\section{Introduction}

Diffusion models have become one of the most successful approaches to generative modeling, achieving striking performance on high-dimensional data such as images, audio, video, and molecular structures \cite{sohl2015deep,song2019generative,song2021scorebased,dhariwal2021diffusion,rombach2022high}. These models generate samples by starting from a simple noise distribution and numerically simulating a reverse-time denoising process driven by a learned score. In practice, high-quality samples are often produced with a moderate number of reverse steps, even when the ambient dimension of the data is extremely large. This empirical success raises a basic theoretical question: why do diffusion samplers work so efficiently in high dimension?
Several recent works have attempted to explain this phenomenon by identifying
notions of effective dimension that are smaller than the ambient dimension.
One perspective is geometric: high-dimensional data may concentrate near a
low-dimensional manifold or admit small covering number, allowing diffusion
samplers to adapt to intrinsic rather than ambient dimension
\cite{liang2025low,pmlr-v291-potaptchik25a,huang2026denoising}. Another
perspective is distributional: structured classes such as Gaussian mixtures
can admit sampling guarantees whose dependence on ambient dimension is
substantially reduced or removed under appropriate assumptions
\cite{li2025dimension}. These results suggest that the apparent
high-dimensionality of natural data may not be the right complexity measure
for diffusion sampling.

In this work, we take a different information-theoretic viewpoint. Rather
than measuring complexity by ambient dimension, covering number, or geometric
intrinsic dimension, we measure the amount of latent uncertainty that the
reverse process must resolve. This viewpoint is motivated by modern image
representations. High-dimensional images are often encoded by compact latent
variables before being decoded back to pixel space. Vector-quantized and
token-based image models, for example, represent images using a sequence of
discrete codebook indices rather than raw pixels
\cite{van2017neural,esser2021taming,yu2024image}. Recent tokenizers such as
TiTok show that a $256\times256$ image can be represented using as few as a
small number of discrete latent tokens \cite{yu2024image}. Such models suggest
that, although images live in a very high-dimensional Euclidean space, much of
their generative uncertainty may be carried by a compact discrete latent
representation.

This motivates the following question: can diffusion sampler convergence be
controlled by the entropy of a latent representation, rather than by the
dimension of the ambient space? We answer this question affirmatively for
Gaussian mixture targets. We consider data of the form
\[
    Z = U + \epsilon G_0,
    \qquad
    U = \mu_J,
\]
where $J$ is a discrete latent variable indexing a mixture component and
$G_0$ is Gaussian noise. The sample $Z$ lies in $\mathbb R^d$, but the
dominant uncertainty is represented by the latent index $J$. Our main result
shows that, for a suitable score-based reverse sampler, the discretization
error is governed by the Shannon entropy $H(J)$ rather than by the ambient
dimension $d$.

More precisely, Theorem~\ref{thm:main-sampling-bound} proves that there
exists a $K$-step sampling grid such that
\[
    \KL(P_\delta\,\|\,\widehat P_{T-\delta})
    \le
    \frac{R}{2T}
    +
    \frac{4H(J)}{K}
    \left(
    2+\log_+\frac{R\,\eta_{\max}}{2H(J)}
    \right)^2
    +
    \frac12 e_{\mathrm{apx}} ,
\]
where $R=\mathbb E\|Z\|^2$, $\eta_{\max}=(\delta+\epsilon^2)^{-1}$, and
$e_{\mathrm{apx}}$ is the score-approximation contribution. Thus, apart from
initialization and score-estimation errors, the leading discretization
contribution scales as
\[
    O\!\left(
    \frac{H(J)}{K}
    \left(
    1+\log_+\frac{R\eta_{\max}}{H(J)}
    \right)^2
    \right).
\]
In particular, the leading term is linear in the entropy of the latent
component and has only logarithmic dependence on scale quantities such as the
second moment and endpoint signal-to-noise level.

The key technical idea is to rewrite the discretization error as an area
functional involving the minimum mean-square error of the Gaussian channel
associated with the latent variable. This converts a numerical analysis
problem into an information-theoretic one. 
Using the I-MMSE identity of Guo, Shamai, and Verd{\'u}~\cite{guo2005mutual} together
with the elementary bound that mutual information is at most the entropy of
the discrete latent variable, we show that the MMSE area is controlled by
$H(J)$. The resulting bound gives an entropy-based explanation for why
diffusion sampling can remain efficient in high-dimensional spaces when the
data admit a compact latent description.

Our analysis also covers discrete target distributions as the zero-variance
limit $\epsilon\to0$. In this case, the Gaussian mixture collapses to a
discrete distribution supported on embedded codewords, and the relevant
complexity becomes the entropy of the target distribution itself. This setting
captures an idealized version of diffusion over vector-quantized latent codes
or embedded discrete sequences. Such quantized representations are used in
practice in VQ-based generative models, including VQ-VAE and VQGAN-style image
tokenizers \cite{van2017neural,esser2021taming}. They are also used in latent
diffusion models: \cite{rombach2022high} train diffusion models in the latent
space of a VQ-regularized first-stage autoencoder, where the first stage uses a
vector-quantization layer and is interpreted as a VQGAN-style compression
model. In this view, the embedding or decoded image dimension may be large,
but the sampling complexity is controlled by the information content of the
latent code rather than by the latent dimension.

This Shannon-entropy dependence is important for two reasons. First, Shannon
entropy measures the effective uncertainty of the latent representation rather
than the size of the latent space. Thus it can be much smaller than
$\log |\mathcal J|$ when the distribution over latent codes is nonuniform,
as is often expected in learned discrete representations. Second, the bound
does not require the latent space to be finite: it applies to countably
infinite collections of mixture components whenever $H(J)<\infty$. In this
sense, the result depends on the information content of the latent variable,
not on the number of possible codes or on the ambient embedding dimension.
The remaining dependence on the second moment and endpoint signal-to-noise
scale is only logarithmic, so these scale parameters affect the discretization
complexity much more mildly than an ambient-dimensional term would.
\section{Related Work}

We now situate our result within the growing literature on non-asymptotic
convergence guarantees for diffusion samplers. A common goal in this line of
work is to determine whether the number of reverse steps must scale with the
ambient dimension, or whether sharper guarantees can exploit additional
structure in the target distribution. Recent theory has made substantial progress under broad data
assumptions, without requiring log-concavity or restrictive functional
inequalities
\cite{chen2023sampling,chen2023improved,benton2024nearly,li2025odt}.
These analyses typically decompose the sampling error into initialization,
discretization, and score-approximation terms. The resulting guarantees can be
roughly divided according to the metric in which the final sampling error is
measured, most commonly total variation distance or Kullback--Leibler
divergence.

\paragraph{Total-variation convergence.}
One line of work studies convergence in total variation distance. For general
target distributions, sharp recent results show that DDPM-type samplers can
achieve total-variation error with nearly linear dependence on the ambient
dimension, for example through $O(d/T)$-type rates under minimal assumptions
\cite{li2025odt}. Related results for probability-flow ODE samplers obtain
nearly $d/\varepsilon$ iteration complexity in total variation under minimal
smoothness assumptions on the data distribution \cite{li2024sharp}. More
refined TV analyses show that this ambient-dimensional dependence can improve
when the target has additional structure: DDPM and DDIM can adapt to an
intrinsic dimension $k$ for broad low-dimensional target classes
\cite{liang2025low}, and approximate Gaussian mixture targets admit
dimension-free TV guarantees, independent of both the ambient dimension and
the number of mixture components up to logarithmic factors
\cite{li2025dimension}. These results give important evidence that diffusion
samplers can exploit structure beyond ambient dimension.

\paragraph{Gaussian mixtures.}
The Gaussian-mixture result of \cite{li2025dimension} is particularly close
to our setting. That work studies targets that are close in total variation to
finite isotropic Gaussian mixtures, with bounded component means, and proves
TV guarantees whose leading complexity is independent of the ambient dimension
and the number of components up to logarithmic factors. Our work is
complementary but different in both metric and complexity measure. We work
directly in KL divergence and obtain a discretization bound controlled by the
Shannon entropy $H(J)$ of the latent mixture component. Thus the leading
quantity in our bound is the effective uncertainty of the component index,
rather than the worst-case number of mixture components. This distinction is
important when the mixture weights are highly nonuniform, since $H(J)$ can be
much smaller than $\log K$. Moreover, our formulation allows countably many
latent components, provided the latent entropy and second moment are finite.

\paragraph{Why total variation may not fully explain practice.}
The goal of convergence theory is not only to prove that diffusion samplers
converge, but also to explain the scale at which approximation errors arise in
practice. From this perspective, total variation distance may be too coarse
for some high-dimensional generative modeling questions. In many diffusion
analyses, the score-estimation contribution is naturally expressed as an
integrated squared score, noise-prediction, or denoising error. When this
quantity is converted into total variation, the resulting bound can saturate.
For example, if an image has $D\approx 2^{16}$ coordinates, then an
$\ell_2^2$ prediction error of order one corresponds to a root-mean-square
coordinate error of order $D^{-1/2}\approx 2^{-8}$. Such an error can be small
from the perspective of coordinatewise neural prediction, while the
corresponding TV upper bound may still be $O(1)$ and therefore close to
vacuous. This motivates working directly with KL divergence, where the
pathwise decomposition retains the natural squared-error approximation term.

\paragraph{KL convergence guarantees.}
A second line of work therefore studies KL convergence more directly. Early
and recent KL analyses prove convergence for score-based samplers under
finite-moment, smoothing, or score-accuracy assumptions, with rates that are
often nearly linear in the ambient dimension
\cite{chen2023improved,benton2024nearly}. Other work obtains KL guarantees
under regularity conditions such as finite relative Fisher information with
respect to a Gaussian reference measure \cite{conforti2025kl}. More recent
results show that KL convergence can also adapt to geometric low-dimensional
structure: under manifold or intrinsic-dimensional assumptions, the number of
sampling steps can scale linearly, up to logarithmic factors, with an
intrinsic dimension rather than with the ambient dimension
\cite{pmlr-v291-potaptchik25a,huang2026denoising}. These results provide a
geometric explanation for dimension reduction in KL: the target distribution
may concentrate near a low-dimensional manifold or satisfy an intrinsic
dimension condition.

\paragraph{Entropy-based KL analyses.}
Closest in spirit to our work are entropy-based KL analyses, which seek
dimension-free discretization bounds in terms of information-theoretic
quantities rather than geometric dimension. \cite{aghapour2026entropy}
derive dimension-free KL discretization guarantees using an MMSE
representation and information-theoretic control
\cite{aghapour2026entropy}. Their analysis is fully independent of
second-moment scale quantities, but the resulting entropy dependence is
quadratic: under weaker assumptions it is controlled by the square of the
order-$1/2$ R\'enyi entropy, $H_{1/2}^2$, and under an additional
concentration condition on the information content it yields a
Shannon-entropy-squared bound.

The rest of the paper develops this latent-entropy viewpoint. We first define
the Gaussian-mixture target model and the corresponding score-based reverse
sampler. We then decompose the pathwise KL error into initialization,
discretization, and score-approximation terms. Finally, we prove the main
entropy-based discretization bound by expressing the discretization error as
an MMSE area functional for the latent Gaussian channel and controlling this
area using the I-MMSE identity.

\section{Problem Setup and Sampling Procedure}

Let $p$ be a target distribution on $\R^d$. We begin with the standard Brownian noising process
\[
    dX_t=dW_t,\qquad X_0\sim p,\qquad t\in[0,T],
\]
where $(W_t)_{t\in[0,T]}$ is a Brownian motion independent of $X_0$. We write $Z:=X_0$ and denote by $p_t$ the law of $X_t$. For $t>0$, let
\[
    m_t(x):=\E[Z\mid X_t=x]
\]
be the Bayes denoiser. Since $X_t=Z+\sqrt t\,G$ with $G\sim\mathcal N(0,I_d)$, Tweedie's formula gives
\[
    \nabla\log p_t(x)=\frac{m_t(x)-x}{t}.
\]
Thus the reverse-time process $Y_s:=X_{T-s}$ satisfies, under the usual time-reversal conditions \cite{haussmann1986time},
\[
    dY_s
    =
    \frac{m_{T-s}(Y_s)-Y_s}{T-s}\,ds+dB_s,
    \qquad
    Y_0=X_T .
\]
We analyze the reverse dynamics up to time $T-\delta$, for a fixed $\delta>0$, in order to avoid the singular terminal endpoint. Let
\[
    0=s_0<s_1<\cdots<s_K=T-\delta,
    \qquad
    t_k:=T-s_k .
\]

A useful way to view the stochastic DDIM sampler with $\eta=1$ in this Brownian setting is as follows. On each interval $(s_{k-1},s_k]$, the posterior mean is frozen at the previous grid point, while the linear part of the reverse drift is integrated exactly. If the exact denoiser were available, this would correspond to
\[
    d\widetilde Y_s
    =
    \frac{
    m_{t_{k-1}}(\widetilde Y_{s_{k-1}})-\widetilde Y_s
    }{T-s}\,ds+dB_s,
    \qquad
    s\in(s_{k-1},s_k].
\]
Solving this linear SDE over one interval gives
\[
    \widetilde Y_{s_k}
    =
    \frac{t_k}{t_{k-1}}\widetilde Y_{s_{k-1}}
    +
    \left(1-\frac{t_k}{t_{k-1}}\right)
    m_{t_{k-1}}(\widetilde Y_{s_{k-1}})
    +
    \sqrt{t_k\left(1-\frac{t_k}{t_{k-1}}\right)}\,\xi_k,
\]
where $\xi_k\sim\mathcal N(0,I_d)$ are independent. In this form, the discretization error is precisely the error caused by holding the posterior mean fixed during each reverse-time step.

We now specialize to the structured targets considered in this paper. Assume that the data are generated from a latent Gaussian mixture:
\[
    Z=U+\eps G_0,
    \qquad
    U=\mu_J,
    \qquad
    G_0\sim\mathcal N(0,I_d),
\]
where $J$ is a discrete latent variable with probabilities $(\pi_j)$, the vectors $\mu_j\in\R^d$ are mixture centers, and $G_0$ is independent of $J$. For $\eps>0$, the target distribution is the continuous Gaussian mixture
\[
    p(z)=\sum_j \pi_j\,\varphi_{\eps^2 I_d}(z-\mu_j).
\]

Under this model, the forward process admits the representation
\[
    X_t=U+\sqrt{t+\eps^2}\,G,
    \qquad
    G\sim\mathcal N(0,I_d).
\]
It is therefore natural to track the posterior mean of the latent center,
\[
    M_t(x):=\E[U\mid X_t=x].
\]
For Gaussian mixtures, the score can be written as
\[
    \nabla\log p_t(x)
    =
    \frac{M_t(x)-x}{t+\eps^2},
    \qquad
    M_t(x)=x+(t+\eps^2)\nabla\log p_t(x).
\]
Thus the latent posterior mean is not an extra oracle: it can be recovered directly from the score.

The reverse SDE can therefore be written in latent-posterior form,
\[
    dY_s
    =
    \frac{M_{T-s}(Y_s)-Y_s}{T-s+\eps^2}\,ds+dB_s,
    \qquad
    Y_0=X_T .
\]
This suggests a modification of the usual posterior-mean freezing rule. Instead of freezing the posterior mean of the full data variable, we freeze the posterior mean of the latent center. Given a learned score $\widehat s_t$, define
\[
    \widehat M_t(x)
    :=
    x+(t+\eps^2)\widehat s_t(x).
\]
The proposed sampler is the continuous-time interpolation
\begin{equation}
\label{eq:latent-sampler-sde}
    d\widetilde Y_s
    =
    \frac{
    \widehat M_{t_{k-1}}(\widetilde Y_{s_{k-1}})
    -
    \widetilde Y_s
    }{T-s+\eps^2}\,ds+dB_s,
    \qquad
    s\in(s_{k-1},s_k].
\end{equation}
Equivalently, setting
\[
    a_k:=\frac{t_k+\eps^2}{t_{k-1}+\eps^2},
\]
one obtains the one-step update
\[
    \widetilde Y_{s_k}
    =
    a_k\widetilde Y_{s_{k-1}}
    +
    (1-a_k)\widehat M_{t_{k-1}}(\widetilde Y_{s_{k-1}})
    +
    \sqrt{(t_k+\eps^2)(1-a_k)}\,\xi_k,
\]
where $\xi_k\sim\mathcal N(0,I_d)$ are independent. Substituting the definition of $\widehat M_t$ shows that the sampler only requires evaluations of the learned score at the grid points. The latent posterior mean is used to reveal the structure of the dynamics, but the implemented algorithm is fully score-based.

\section{Analysis of the Proposed Sampler}

We now analyze the sampler defined in \eqref{eq:latent-sampler-sde}. The purpose of this section is to separate the sampling error into the error coming from time discretization and the error coming from score approximation. The entropy-based control of the discretization term is developed in the next section.

\begin{assumption}[Gaussian mixture target]
\label{ass:gmm-target}
The target distribution is generated as
\[
    Z=U+\eps G_0,
    \qquad
    U=\mu_J,
    \qquad
    G_0\sim\mathcal N(0,I_d),
\]
where $J$ is a discrete latent variable taking values in an index set $\mathcal J$ with probabilities $(\pi_j)_{j\in\mathcal J}$, the vectors $\{\mu_j\}_{j\in\mathcal J}\subset\R^d$ are the mixture centers, and $G_0$ is independent of $J$. We assume that the latent entropy is finite,
\[
    H(J):=-\sum_{j\in\mathcal J}\pi_j\log\pi_j<\infty,
\]
and that the target has finite second moment,
\[
    \E\norm{Z}^2 = R < \infty .
\]
\end{assumption}

\begin{remark}
When $\eps=0$, the model reduces to a discrete distribution supported on the
set of centers $\{\mu_j\}$. This includes latent diffusion models with
vector-quantized latent representations, where data are represented by
discrete codebook indices, mapped to Euclidean latent embeddings, and the
diffusion process is performed in this latent space before decoding back to
data space \cite{rombach2022high}. It also connects to continuous diffusion
language models, where discrete tokens are embedded into Euclidean space and
the Gaussian diffusion process is performed on these continuous embeddings.
For example, Diffusion-LM denoises a sequence of Gaussian vectors into word
vectors \cite{li2022diffusion}.
\end{remark}
\begin{remark}
When $\eps>0$, we model the data distribution in pixel space as the decoded
image associated with a discrete latent code, plus a small continuous residual.
Concretely, the center $\mu_j$ may be interpreted as the pixel-space image
obtained by decoding the discrete latent code $j$, while the Gaussian term
$\eps G_0$ represents reconstruction error, decoder variability, or additional
observation noise around that decoded image. Thus the model
\[
    Z=\mu_J+\eps G_0
\]
can be viewed as a pixel-space Gaussian mixture induced by a discrete latent
representation.

This perspective is motivated by modern image tokenizers, which represent
high-dimensional images using compact discrete latent codes before decoding
them back to pixel space. For example, TiTok represents a $256\times256$ image
using as few as $32$ discrete tokens \cite{yu2024image}. In this setting, the
sample $Z$ lies in the ambient pixel space $\R^d$, but the dominant generative
uncertainty is carried by the discrete latent index $J$, while $\eps$ controls
the size of the reconstruction or residual error around the decoded center.
\end{remark}

Let $P$ and $\widetilde P$ denote the path laws on $C([0,T-\delta],\R^d)$ of the exact latent-posterior reverse process and the approximate process in \eqref{eq:latent-sampler-sde}, initialized from the same law at time zero. On each interval $(s_{k-1},s_k]$, define the drift mismatch along the exact path by
\begin{equation}
\label{eq:drift-mismatch}
    \Delta_s
    :=
    \frac{
    M_{T-s}(Y_s)
    -
    \widehat M_{t_{k-1}}(Y_{s_{k-1}})
    }{
    T-s+\eps^2
    } .
\end{equation}
The remaining source of error comes from replacing the exact score by the
learned score along the sampling grid. To isolate this contribution, we
measure the error of the learned model through the induced \(x_0\)-prediction
loss at the grid points. This quantity is the part of the final sampling error
that depends on the learned score, rather than on the discretization of the
reverse dynamics. We therefore record an assumption that controls the total
score-approximation error along the chosen sampling schedule.
As in the schedule-dependent score-error assumption of
~\cite{benton2024nearly}, the approximation assumption below is
understood to be imposed on the same discretization grid used by the sampler.
In particular, when applying the assumption in Theorem~\ref{thm:main-sampling-bound}, the
grid is the final \(K\)-step grid constructed in that theorem.
\begin{assumption}
\label{ass:approx-error}
For the chosen SNR grid $\gamma_k=1/t_k$, the learned score satisfies
\[
    \mathcal E_{\mathrm{apx}}
    =
    \sum_{k=1}^K
    (\gamma_k-\gamma_{k-1})
    L_{x_0}(\gamma_{k-1})
    \le e_{\mathrm{apx}},
\]
where
\[
    L_{x_0}(\gamma_k)
    :=
    \E\left[
    \norm{
    m_{t_k}(X_{t_k})-\widehat m_{t_k}(X_{t_k})
    }^2
    \right],
    \qquad
    t_k=\gamma_k^{-1}.
\]
Here $e_{\mathrm{apx}}\ge 0$ denotes the total $x_0$-prediction error of the learned model along the sampling schedule.
\end{assumption}
\begin{theorem}
\label{thm:main-sampling-bound}
Assume Assumption~\ref{ass:gmm-target}, and suppose that \(H(J)>0\) and
\(R>0\). Let the sampler \eqref{eq:latent-sampler-sde} be initialized from the
Gaussian prior \(\mathcal N(0,T I_d)\), and let
\(\widehat P_{T-\delta}\) denote the law of its output at time \(T-\delta\).
Let \(P_\delta\) denote the law of the forward process \(X_\delta\). Define
\[
    \eta_{\max}:=\frac{1}{\delta+\eps^2},
    \qquad
    \alpha:=\frac{2H(J)}{R},
    \qquad
    \ell:=\log_+\!\left(\frac{\eta_{\max}}{\alpha}\right)
    =
    \log_+\!\left(\frac{R\,\eta_{\max}}{2H(J)}\right),
\]
where \(\log_+(x):=\max\{\log x,0\}\). Then, for every integer
\[
    K\ge 4(2+\ell),
\]
there exists a \(K\)-step grid \(\Gamma_K\) for the sampler
\eqref{eq:latent-sampler-sde} such that, if
Assumption~\ref{ass:approx-error} holds on this grid with approximation bound
\(e_{\mathrm{apx}}\), then
\[
    \KL(P_\delta\,\|\,\widehat P_{T-\delta})
    \le
    \frac{R}{2T}
    +
    \frac{4H(J)}{K}
    \left(2+\ell\right)^2
    +
    \frac12 e_{\mathrm{apx}} .
\]
Equivalently,
\[
    \KL(P_\delta\,\|\,\widehat P_{T-\delta})
    \le
    \frac{R}{2T}
    +
    \frac{4H(J)}{K}
    \left(
    2+\log_+\frac{R\,\eta_{\max}}{2H(J)}
    \right)^2
    +
    \frac12 e_{\mathrm{apx}} .
\]
In particular, the discretization contribution to the KL error is of order
\[
    \frac{H(J)}{K}
    \left(
    1+\log_+\frac{R\,\eta_{\max}}{H(J)}
    \right)^2 .
\]
Consequently, up to initialization and score-approximation errors, it is
sufficient to take
\[
    K
    =
    O\!\left(
    \frac{H(J)}{\xi}
    \left(
    1+\log_+\frac{R\,\eta_{\max}}{H(J)}
    \right)^2
    \right)
\]
steps to make the discretization contribution at most \(\xi\).
\end{theorem}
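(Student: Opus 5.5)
The plan is to split the KL error into three pieces: initialization, discretization, and approximation. I would then turn the discretization piece into an area functional of the latent MMSE curve and bound that area with the I-MMSE identity.

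\textbf{Step 1: reduce to a path-space bound.} By the data-processing inequality, $\KL(P_\delta\|\widehat P_{T-\delta})$ is at most the KL between the path law of the exact reverse process started from $p_T$ and that of the sampler \eqref{eq:latent-sampler-sde} started from $\mathcal N(0,TI_d)$. The chain rule and Girsanov then give
\[
\KL(p_T\|\mathcal N(0,TI_d))+\tfrac12\,\E\int_0^{T-\delta}\norm{\Delta_s}^2ds,
\]
with $\Delta_s$ as in \eqref{eq:drift-mismatch}. Write $p_T=\E[\mathcal N(Z,TI_d)]$ and use joint convexity of KL. Since $\KL(\mathcal N(z,TI)\|\mathcal N(0,TI))=\norm{z}^2/(2T)$, the initialization term is at most $R/(2T)$.

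\textbf{Step 2: split $\Delta_s$ orthogonally.} On $(s_{k-1},s_k]$ write
\[
M_{T-s}(Y_s)-\widehat M_{t_{k-1}}(Y_{s_{k-1}})=\bigl[M_{T-s}(Y_s)-M_{t_{k-1}}(Y_{s_{k-1}})\bigr]+\bigl[M_{t_{k-1}}-\widehat M_{t_{k-1}}\bigr](Y_{s_{k-1}}).
\]
Conditionally on $X_t$, the latent $U$ is independent of $(X_r)_{r>t}$, since $X_r-X_t$ is a Brownian increment. Hence $s\mapsto M_{T-s}(Y_s)=\E[U\mid (Y_r)_{r\le s}]$ is a martingale in the reverse filtration. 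The first bracket is therefore orthogonal to the second, which is $\mathcal F_{s_{k-1}}$-measurable, and the cross term vanishes with no factor-$2$ loss.

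For the approximation bracket, Tweedie applied to $Z$ and to $U$ gives $M_t-x=\frac{t+\eps^2}{t}(m_t-x)$, and the same identity holds for the hatted quantities. Thus $\norm{M_t-\widehat M_t}^2=\bigl(\frac{t+\eps^2}{t}\bigr)^2\norm{m_t-\widehat m_t}^2$. Integrating $(t+\eps^2)^{-2}$ over $[t_k,t_{k-1}]$ and comparing the result with $\gamma_k-\gamma_{k-1}$ should bound this contribution by $\mathcal E_{\mathrm{apx}}\le e_{\mathrm{apx}}$. This is a routine inequality that I would check.

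\textbf{Step 3: write discretization as an MMSE area.} Set $\eta=1/(t+\eps^2)$ and $\mmse(\eta)=\E\norm{U-M_t(X_t)}^2$. By the martingale property,
\[
\E\norm{M_{T-s}(Y_s)-M_{t_{k-1}}(Y_{s_{k-1}})}^2=\mmse(\eta_{k-1})-\mmse(\eta(s)).
\]
Since $ds/(T-s+\eps^2)^2=d\eta$, the discretization term becomes
\[
\tfrac12\sum_k\int_{\eta_{k-1}}^{\eta_k}\bigl(\mmse(\eta_{k-1})-\mmse(\eta)\bigr)d\eta,
\]
which is the area between the left staircase and the decreasing $\mmse$ curve on $[\eta_0,\eta_{\max}]$.

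The I-MMSE identity of \cite{guo2005mutual} gives $\int_0^\infty\mmse\,d\eta=2I(U;\sqrt\eta U+N)\le 2H(J)$. Because $\mmse$ is nonincreasing, $\eta\,\mmse(\eta)\le\int_0^\eta\mmse\le 2H(J)$. Together with $\mmse\le\E\norm{U}^2\le R$, this yields
\[
\mmse(\eta)\le\min\{R,\,2H(J)/\eta\},
\]
with crossover point $\alpha=2H(J)/R$.

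\textbf{Step 4: build the grid (the main obstacle).} This is where the constants $4$ and $(2+\ell)^2$ have to come out exactly. I would place a few uniform steps on $[\eta_0,\alpha]$, where the area is at most $R\,\Delta\eta$, and then geometric steps with ratio $\rho$ on $[\alpha,\eta_{\max}]$, which needs about $\ell/\log\rho$ steps.

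On a geometric interval the area is at most $(\rho-1)\,\eta_{k-1}\bigl(\mmse(\eta_{k-1})-\mmse(\eta_k)\bigr)$. Summing by parts, $\sum_k\eta_{k-1}\bigl(\mmse(\eta_{k-1})-\mmse(\eta_k)\bigr)$ is bounded by $\int\mmse\,d\eta$ plus a boundary term $\alpha\,\mmse(\alpha)\le2H(J)$, so by $O(H(J))$.

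Alternatively, the grid could be chosen adaptively so that each step carries an equal share of $\int\eta\,d(-\mmse)$, measured in $\log\eta$. The square $(2+\ell)^2$ should come from balancing the number of steps $\approx(2+\ell)/\log\rho$ against the per-step area factor $\rho-1$. Taking $\rho-1\approx 4(2+\ell)/K$ is admissible precisely when $K\ge4(2+\ell)$, and I expect it to give $4H(J)(2+\ell)^2/K$.

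The delicate points are the endpoint bookkeeping at $\alpha$ and at $\eta_{\max}$, and the fact that $\log\rho\ge(\rho-1)/2$ in this range. I would handle those first, then combine the three terms to get the stated bound and the step-count corollary.
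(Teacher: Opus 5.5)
Your Steps 1--3 follow the paper: the convexity bound $R/(2T)$, the reverse-martingale orthogonality that removes the cross term, the factor comparison $\bigl((t_{k-1}+\eps^2)/t_{k-1}\bigr)^2(\eta_k-\eta_{k-1})\le\gamma_k-\gamma_{k-1}$, the MMSE-area identity, and the envelope $\mmse(\eta)\le\min\{R,2H(J)/\eta\}$. One technical point: the paper does not apply Girsanov directly. It invokes a finite-energy pathwise-KL proposition from the prior work it cites, and you should do the same.

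The real difference is in Step~4. The paper sums by parts into $\sum_k(\Delta_{k+1}-\Delta_k)m_k$ and bounds each geometric term using only the \emph{pointwise} envelope $m_k\le 2H(J)/\eta_k$. Each of the roughly $\ell/\log r$ geometric terms then costs $2H(J)(r-1)^2/r$, which gives $\mathcal E_{\mathrm{disc}}\le hR(2+\ell)$; the square appears once $h\propto(2+\ell)/K$. You instead bound each geometric cell by $(\rho-1)\eta_{k-1}(m_{k-1}-m_k)$ and sum by parts against $\int\mmse\,d\eta$. You then control that integral by the \emph{integrated} I-MMSE bound $\int_0^\infty\mmse\,d\eta\le 2H(J)$. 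The envelope alone only gives $\int_\alpha^{\eta_{\max}}\mmse\le 2H(J)\ell$, which is exactly the paper's extra logarithm.

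Concretely, take $h=\alpha(\rho-1)$.
\begin{itemize}
\item The uniform part telescopes to at most $h\,\mmse(\eta_{\min})\le hR$.
\item Since $\alpha\,\mmse(\alpha)\le\int_0^\alpha\mmse$, the geometric part is at most $(\rho-1)\int_0^\infty\mmse\le(h/\alpha)\,2H(J)=hR$.
\end{itemize}
Hence $\mathcal E_{\mathrm{disc}}\le 2hR$, with no $\ell$. Use the paper's choice $h=4\alpha(2+\ell)/K$, which gives the same step count $N\le\frac{\alpha}{h}(1+2\ell)+2\le K$. The KL discretization term is then at most $8H(J)(2+\ell)/K$, and this is $\le 4H(J)(2+\ell)^2/K$ because $2+\ell\ge 2$.

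So your route proves a bound \emph{linear} in $2+\ell$, and the theorem follows a fortiori. The paper's route buys simplicity, since only the two-regime envelope is used, at the price of one logarithm.

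Three points in your plan need correcting.
\begin{itemize}
\item The square does not ``come from balancing'' in your argument. Your own estimate yields a linear factor, so do not try to force $(2+\ell)^2$ out of it.
\item The uniform part cannot be ``a few'' steps. Its area is at most $R$ times the step size, so the step must be $h\approx\alpha(\rho-1)$, i.e.\ about $K/(4(2+\ell))$ uniform steps. With $O(1)$ uniform steps, your estimate for that part is of order $R\alpha=2H(J)$, which does not decay in $K$.
\item Do not bound the boundary term $\alpha\,\mmse(\alpha)$ and $\int\mmse$ separately by $2H(J)$ each. That gives $3hR$, and with $h=4\alpha(2+\ell)/K$ the constant $4$ is then only reached for $\ell\ge 1$. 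Combining them as above, or retuning $h$, avoids this.
\end{itemize}
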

\paragraph{Proof sketch.}
The full proof of Theorem~\ref{thm:main-sampling-bound} is deferred to
Appendix~\ref{app:proof-main}. By the pathwise-KL argument in
Appendix~\ref{app:pathwise-kl}, the total error is reduced to an initialization
term, a score-approximation term, and a discretization term. The initialization
term is handled directly, and the score-approximation term is controlled by
Assumption~\ref{ass:approx-error}. The main task is therefore to bound the
discretization term. For this, we first rewrite the discretization error as an
MMSE area functional in the regularized SNR variable
\[
    \eta := \frac{1}{t+\eps^2},
\]
as stated in Lemma~\ref{lem:mmse-representation}. To control this area, we need
a pointwise upper bound on the latent MMSE. Lemma~\ref{lem:entropy-mmse}
provides exactly this bound:
\[
    \mmse(\eta)\le \min\left\{R,\frac{2H(J)}{\eta}\right\}.
\]
where 
\[
    \mmse(\eta)
    =
    \E\!\left[
    \norm{
    U-\E[U\mid U+\eta^{-1/2}G]
    }^2
    \right]
\]
This is the key structural input in the proof. It shows that the MMSE is
bounded by the second moment \(R\) in the low-SNR regime and by the entropy
term \(2H(J)/\eta\) in the high-SNR regime. This two-regime behavior is what
motivates the hybrid grid construction used in the appendix.

\begin{lemma}
\label{lem:mmse-representation}
For any grid written in the regularized SNR variable as
\[
    \eta_k:=\frac1{t_k+\eps^2},
\]
the discretization term satisfies
\[
    \mathcal E_{\mathrm{disc}}
    =
    \sum_{k=1}^K
    \int_{\eta_{k-1}}^{\eta_k}
    \left(
    \mmse(\eta_{k-1})-\mmse(\eta)
    \right)\,d\eta .
\]
\end{lemma}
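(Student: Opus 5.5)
The plan is to start from the form of the discretization term produced by the pathwise-KL decomposition in Appendix~\ref{app:pathwise-kl}. There the drift mismatch \eqref{eq:drift-mismatch} is split into a score-approximation part and a pure discretization part, obtained by replacing \(\widehat M\) with the exact \(M\). I will take that decomposition to give
\[
    \mathcal E_{\mathrm{disc}}
    =
    \sum_{k=1}^K\int_{s_{k-1}}^{s_k}
    \frac{\E\norm{M_{T-s}(Y_s)-M_{t_{k-1}}(Y_{s_{k-1}})}^2}{(T-s+\eps^2)^2}\,ds ,
\]
with \(Y\) the exact reverse process, so that \(Y_s\stackrel{d}{=}X_{T-s}\) jointly in time. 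The whole argument then consists of one change of variables and one orthogonality identity.

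\emph{Step 1 (change of variables).} Set \(\eta(s)=1/(T-s+\eps^2)\). Then \(d\eta=ds/(T-s+\eps^2)^2\), which exactly absorbs the squared denominator. Also, \(s\mapsto\eta\) maps \([s_{k-1},s_k]\) increasingly onto \([\eta_{k-1},\eta_k]\). Writing \(\bar M_\eta:=M_{T-s}(Y_s)\), each summand becomes
\[
    \int_{\eta_{k-1}}^{\eta_k}\E\norm{\bar M_\eta-\bar M_{\eta_{k-1}}}^2\,d\eta .
\]

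\emph{Step 2 (martingale structure).} I will show that \(s\mapsto M_{T-s}(Y_s)\) is a martingale in the reverse filtration \(\mathcal F_s=\sigma(Y_r:r\le s)=\sigma(X_t:t\ge T-s)\). In forward time, \(X_t=U+\eps G_0+W_t\), and for \(t'>t\) the increment \(X_{t'}-X_t\) is independent of \((U,G_0,W_r)_{r\le t}\). Hence, conditionally on \(X_t\), the later path \((X_{t'})_{t'\ge t}\) is independent of \(U\), which gives
\[
    \E[U\mid X_r,\ r\ge t]=\E[U\mid X_t]=M_t(X_t).
\]
Therefore \(M_{T-s}(Y_s)=\E[U\mid\mathcal F_s]\) is a Doob martingale. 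Its increments are orthogonal in \(L^2\), using \(\E\norm{U}^2\le R<\infty\), and so
\[
    \E\norm{\bar M_\eta-\bar M_{\eta_{k-1}}}^2=\E\norm{\bar M_\eta}^2-\E\norm{\bar M_{\eta_{k-1}}}^2 .
\]

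\emph{Step 3 (identify MMSE).} Since \(X_t=U+\sqrt{t+\eps^2}\,G=U+\eta^{-1/2}G\), the Pythagorean identity for conditional expectation gives
\[
    \E\norm{\bar M_\eta}^2=\E\norm{U}^2-\mmse(\eta).
\]
Substituting this into Step 2, the \(\E\norm{U}^2\) terms cancel and the integrand becomes \(\mmse(\eta_{k-1})-\mmse(\eta)\). Summing over \(k\) yields the claimed identity.

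The main obstacle I expect is Step 2. It requires justifying rigorously that the exact reverse process, as a path, has the same law as the time-reversed forward path, which is the time-reversal result of \cite{haussmann1986time} that is already used to define \(P\). It also requires checking that the conditional independence (Markov) property carries over correctly to the regularized variable \(U\) rather than \(Z\). The remaining steps are routine.
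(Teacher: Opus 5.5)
Your proof is correct, and its skeleton is the paper's: the substitution $d\eta=ds/(T-s+\eps^2)^2$, orthogonality of martingale increments, and the Pythagorean identity $\E\norm{M_\eta}^2=\E\norm{U}^2-\mmse(\eta)$. Where you differ is in how the martingale property is obtained.

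The paper builds an auxiliary channel $Y_\eta=\eta U+W_\eta$. It shows, via the path likelihood $\exp\bigl(\ip{u}{Y_\eta}-\tfrac{\eta}{2}\norm{u}^2\bigr)$, that $Y_\eta$ is sufficient for $U$ given $\{Y_\lambda\}_{\lambda\le\eta}$. You instead stay on the forward process. You use that the future increments $(W_{t'}-W_t)_{t'\ge t}$ are independent of $(U,X_t)$, so $\E[U\mid X_r,\ r\ge t]=\E[U\mid X_t]$.

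Your route is more elementary, and it computes the expectation on the very process that defines $\mathcal E_{\mathrm{disc}}$, so no transfer step is needed. The paper, by contrast, identifies $\E\norm{M_{T-s}(Y_s)-M_{t_{k-1}}(Y_{s_{k-1}})}^2$ with $\E\norm{M_\eta-M_{\eta_{k-1}}}^2$ on the auxiliary space. That identification tacitly requires the two-time joint laws of $(U,X_t,X_{t'})$ and $(U,Y_\eta/\eta,Y_{\eta'}/\eta')$ to agree. This is true, since both noise processes are centered Gaussian with covariance $(\min(t,t')+\eps^2)I_d=\max(\eta,\eta')^{-1}I_d$, but the paper checks it only at a single time. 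In exchange, the paper's construction places $M_\eta$ directly in the standard continuous-time Gaussian channel behind the I-MMSE identity used in Lemma~\ref{lem:entropy-mmse}.

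Your Step 2 also supplies exactly the justification for $M_t(X_t)=\E[U\mid\mathcal F_t]$, which Appendix~\ref{app:pathwise-kl} asserts without proof. Finally, the obstacle you flag is milder than you fear. Only two-time marginals of $Y$ enter, and these coincide with those of $(X_{T-s})_s$ by the same time reversal the paper already uses to define $P$.
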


\begin{proof}
The proof is given in Appendix~\ref{app:lemma1proof}. This MMSE-functional
representation is closely related to the entropy-based viewpoint developed in
\cite{aghapour2026entropy}.
\end{proof}

\begin{lemma}
\label{lem:entropy-mmse}
Under Assumption~\ref{ass:gmm-target}, for every \(\eta>0\),
\[
    \mmse(\eta)
    \le
    \min\left\{
    R,\frac{2H(J)}{\eta}
    \right\}.
\]
\end{lemma}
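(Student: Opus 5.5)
The plan is to prove the two bounds in the minimum separately. The bound by \(R\) is a trivial-estimator comparison. The bound \(2H(J)/\eta\) comes from integrating the I-MMSE identity of \cite{guo2005mutual} and using that mutual information is at most the entropy of the discrete latent variable.

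\emph{Low-SNR bound.} The conditional mean \(\E[U\mid U+\eta^{-1/2}G]\) minimizes mean-square error over all measurable estimators. Comparing it with the constant estimator \(0\) gives \(\mmse(\eta)\le \E\norm{U}^2\). Since \(Z=U+\eps G_0\) with \(G_0\) independent of \(U\) and centered, we have \(\E\norm{Z}^2=\E\norm{U}^2+\eps^2 d\ge \E\norm{U}^2\). Hence \(\mmse(\eta)\le R\).

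\emph{High-SNR bound.} I would rescale the channel to the standard form \(Y_\eta:=\sqrt{\eta}\,U+G\). This is an invertible transformation of \(U+\eta^{-1/2}G\), so it has the same MMSE and the same mutual information \(I(\eta):=I(U;Y_\eta)\). Since \(\E\norm{U}^2<\infty\), the vector I-MMSE identity applies and gives
\[
I(\eta)=\frac12\int_0^\eta \mmse(s)\,ds,
\]
using \(I(0)=0\). Next, \(s\mapsto\mmse(s)\) is nonincreasing. The reason is that for \(s<\eta\), one can realize \(Y_s\) in distribution as \(\sqrt{s/\eta}\,Y_\eta+\sqrt{1-s/\eta}\,G'\) with \(G'\) independent Gaussian noise. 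So \(Y_s\) is a degraded version of \(Y_\eta\), and conditioning on the richer observation cannot increase the MMSE. It follows that
\[
I(\eta)\ge \frac{\eta}{2}\,\mmse(\eta).
\]
Finally, \(U=\mu_J\) is a deterministic function of the discrete variable \(J\). Therefore
\[
I(U;Y_\eta)\le H(U)\le H(J)<\infty,
\]
where the first inequality is \(I(U;Y)=H(U)-H(U\mid Y)\le H(U)\) for discrete \(U\), and the second is the data processing inequality. Combining the two displays gives \(\mmse(\eta)\le 2H(J)/\eta\).

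\emph{Main obstacle.} The delicate step is justifying the integrated I-MMSE identity when \(J\) has countably many values, possibly with unbounded centers. I would handle this by invoking the general Guo--Shamai--Verd\'u result, which requires only a finite second moment of the input and holds in any dimension \(d\). For the mutual-information inequality alone, a limiting argument on \([\eta_0,\eta]\) with \(\eta_0\downarrow 0\) is enough, together with continuity of \(I\) at \(0\). That continuity follows from the bound \(I(\eta)\le \tfrac{\eta}{2}\E\norm{U}^2\).
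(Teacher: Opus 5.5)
Your proposal is correct and follows essentially the same route as the paper: the trivial-estimator bound $\mmse(\eta)\le\E\norm{U}^2\le R$, then the integrated I-MMSE identity combined with monotonicity of the MMSE and $I(U;U+\eta^{-1/2}G)\le H(J)$. Beyond the paper, you also justify monotonicity via channel degradation and check that I-MMSE applies with countably many, possibly unbounded, centers; the paper takes both points for granted.
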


\begin{proof}
Since \(Z=U+\eps G_0\) and \(G_0\) is independent of \(U\),
Assumption~\ref{ass:gmm-target} implies
\[
    \E\norm{U}^2\le \E\norm{Z}^2=R.
\]
Therefore
$\mmse(\eta)\le \E\norm{U}^2 \le R.$

For the second bound, we use the I-MMSE identity for the Gaussian channel:
\[
    I(U;U+\eta^{-1/2}G)
    =
    \frac12
    \int_0^\eta \mmse(\lambda)\,d\lambda .
\]
Since \(\mmse(\lambda)\) is nonincreasing in \(\lambda\),
\[
    \frac{\eta}{2}\mmse(\eta)
    \le
    I(U;U+\eta^{-1/2}G).
\]
Moreover, \(U\) is a measurable function of the discrete latent variable \(J\),
so by data processing,
\[
    I(U;U+\eta^{-1/2}G)\le H(J).
\]
Combining the last two inequalities yields
\[
    \mmse(\eta)\le \frac{2H(J)}{\eta},
\]
which proves the claim.
\end{proof}

Lemma~\ref{lem:entropy-mmse} is used together with
Lemma~\ref{lem:mmse-representation} to design the sampling grid. Indeed, the
two bounds meet at
\[
    \alpha:=\frac{2H(J)}{R}.
\]
Accordingly, in the proof of Theorem~\ref{thm:main-sampling-bound} we choose a
hybrid grid in \(\eta\): uniform on \([\eta_{\min},\alpha]\), where the MMSE is
bounded by \(R\), and geometric on \([\alpha,\eta_{\max}]\), where the MMSE is
bounded by \(2H(J)/\eta\). Summing the corresponding MMSE areas then yields the
stated discretization bound.
\begin{remark}
If \(H(J)=0\), then the latent variable \(J\) is deterministic. Hence \(U\)
is deterministic and
\[
    \mmse(\eta)
    =
    \E\left[
    \norm{U-\E[U\mid U+\eta^{-1/2}G]}^2
    \right]
    =
    0
\]
for every \(\eta>0\). Therefore the MMSE-area discretization term vanishes
identically. In this case, the discretization contribution in
Theorem~\ref{thm:main-sampling-bound} should be interpreted as zero.
\end{remark}
\begin{remark}
\label{rem:discrete-vq-language}
When $\eps=0$, the Gaussian mixture model reduces to a discrete target
distribution supported on the centers $\{\mu_j\}$. Then $Z=U$, so the latent
posterior mean $M_t$ coincides with the usual data posterior mean $m_t$.
Hence the proposed sampler reduces to the usual posterior-mean freezing
discretization, or equivalently the stochastic DDIM-type sampler with
$\eta=1$ in the Gaussian noising formulation. Theorem~\ref{thm:main-sampling-bound}
therefore shows that the discretization error is controlled by the entropy of
the discrete target.

For vector-quantized latent representations with $n$ tokens and alphabet size
$S$, the latent index satisfies $J\in[S]^n$, and hence
\[
    H(J)\le n\log S .
\]
Thus the discretization contribution is
\[
    O\!\left(
        \frac{H(J)}{K}(1+\log R)^2
    \right),
\]
up to terminal-SNR logarithmic factors. In the worst case this gives
\[
    O\!\left(
        \frac{n\log S}{K}(1+\log R)^2
    \right).
\]
The same interpretation applies to embedded language models: for a length-$d$
sequence over a vocabulary of size $S$, one has $J\in[S]^d$ and
\[
    H(J)\le d\log S ,
\]
so the worst-case discretization contribution is
\[
    O\!\left(
        \frac{d\log S}{K}(1+\log R)^2
    \right),
\]
again up to terminal-SNR logarithmic factors.

This differs from discrete diffusion samplers such as $\tau$-leaping, whose KL
step-complexity bounds are stated directly in terms of the vocabulary size.
For example, \cite{liang2025discrete} gives a standard $\tau$-leaping rate
\[
    \widetilde O\!\left(\frac{d^2 S}{\varepsilon_{\rm KL}}\right),
\]
improving the previous
\[
    \widetilde O\!\left(\frac{d^2 S^2}{\varepsilon_{\rm KL}}\right)
\]
bound. By contrast, in the Gaussian diffusion setting considered here, the
vocabulary-size dependence enters only through the entropy bound
$H(J)\le d\log S$. Thus, for Gaussian noising of embedded discrete data, the present bound gives
a sharper vocabulary-size dependence than existing \(\tau\)-leaping-type
discrete diffusion guarantees.
\end{remark}

\begin{remark}
\label{rem:titok-gmm}
The Gaussian mixture model can be viewed as an abstraction of image
distributions generated from compact discrete latent codes. For example,
TiTok represents a $256\times256$ image using as few as $32$ discrete tokens,
with codebook size $4096$ in the TiTok-L-32 configuration
\cite{yu2024image}. Hence the worst-case entropy of the latent code satisfies
\[
    H(J)\le 32\log 4096 \approx 266.2
\]
nats, or equivalently \(384\) bits. This is only a worst-case bound; the
actual entropy may be much smaller if the token distribution is nonuniform.

If a pixel-space image is modeled as the decoding of such a discrete latent
code plus small continuous reconstruction noise, then the resulting
pixel-space law is naturally idealized as a Gaussian mixture. Each mixture
component corresponds to one latent code, while the Gaussian perturbation
captures local variability around the decoded image.

In the TiTok-L-32 worst case, the discretization error is coarsely bounded by
\[
    O\!\left(
        \frac{266}{K}(1+\log R)^2
    \right).
\]
Thus the leading sampling complexity is governed by the entropy of the latent
image code, rather than by the ambient pixel dimension.
\end{remark}
\begin{remark}
\label{rem:compare-approx-gmm}
A related result is the dimension-free DDPM analysis of
\cite{li2025dimension} for approximate Gaussian mixtures. In the exact finite
GMM case, let \(K_{\rm mix}\) denote the number of mixture components. Their TV
discretization bound scales as
\[
    \widetilde O\!\left(\frac{\log^2 K_{\rm mix}}{K}\right),
\]
up to additional logarithmic factors and score-estimation error. Their result
is dimension-free and does not require component separation, but it is proved
for finite isotropic GMMs under a component-mean growth condition, a specific
DDPM schedule, and a clipped-score analysis.

Our result is complementary: it replaces dependence on the number of mixture
components by dependence on the entropy of the latent component. This can be
strictly sharper for highly nonuniform mixtures, since \(H(J)\le \log K_{\rm mix}\)
and the inequality can be loose. It also allows countably many components
whenever \(H(J)<\infty\) and the second-moment assumption holds.

Finally, although Theorem~\ref{thm:main-sampling-bound} is stated in KL
divergence, Pinsker's inequality gives the corresponding TV control
\[
    TV(P_\delta,\widehat P_{T-\delta})
    \le
    \sqrt{\frac12
    \KL(P_\delta\|\widehat P_{T-\delta})}.
\]
Thus, in low-entropy latent-mixture settings, the present bound can give a
sharper dependence on mixture complexity.
\end{remark}
\section*{Limitations}
Our analysis is specific to Gaussian-mixture targets and to the latent-posterior
freezing sampler studied here. The bound assumes finite latent entropy and a
finite second moment, and the score-approximation term is imposed on the same
grid used by the sampler. The result explains settings with compact discrete
or countable latent structure, but it does not by itself show that arbitrary
high-dimensional data distributions admit such a representation.

\printbibliography
\appendix

\section{Proof of Theorem~\ref{thm:main-sampling-bound}}\label{app:proof-main}

The proof has three parts. First, the mismatch between the true initial law $p_T$ and the Gaussian prior contributes the initialization error $R/(2T)$. Second, Girsanov-type theorem decomposes the remaining pathwise KL into the discretization term and the approximation term, as in the standard diffusion analysis of \cite{chen2023sampling}. Third, the discretization term is rewritten as an MMSE area functional in the regularized SNR variable and then bounded using the entropy of the latent mixture variable.

We begin with the initialization error. The exact reverse process starts from the law of $X_T=Z+\sqrt T G$, while the implemented sampler starts from $\mathcal N(0,T I_d)$. By convexity of KL divergence,
\[
    \KL(p_T\,\|\,\mathcal N(0,T I_d))
    \le
    \E_Z\,
    \KL\!\left(
    \mathcal N(Z,T I_d)\,\|\,\mathcal N(0,T I_d)
    \right)
    =
    \frac{\E\norm{Z}^2}{2T}
    =
    \frac{R}{2T}.
\]
Therefore, by the chain rule for pathwise KL, the initialization mismatch
contributes at most \(R/(2T)\).

Next, we compare the exact and approximate reverse path measures under the
correct initial law. A direct application of Girsanov's theorem is not
automatic under our minimal assumptions. Instead, we use the pathwise-KL
statement appearing as Proposition~1 in \cite{aghapour2026entropy} and as
Proposition~3 in \cite{benton2024nearly}: if
\[
    \E_P\int_0^{T-\delta}\norm{\Delta_s}^2\,ds<\infty,
\]
then
\[
    \KL(P\,\|\,\widetilde P)
    \le
    \frac12
    \E_P\int_0^{T-\delta}\norm{\Delta_s}^2\,ds .
\]

It therefore suffices to verify the required square-integrability. Since
\[
    \Delta_s=A_s+B_s,
\]
where, for $s\in(s_{k-1},s_k]$,
\[
    A_s:=
    \frac{
    M_{T-s}(Y_s)-M_{t_{k-1}}(Y_{s_{k-1}})
    }{
    T-s+\eps^2
    },
    \qquad
    B_s:=
    \frac{
    M_{t_{k-1}}(Y_{s_{k-1}})
    -
    \widehat M_{t_{k-1}}(Y_{s_{k-1}})
    }{
    T-s+\eps^2
    } .
\]
we have
\[
    \norm{\Delta_s}^2\le 2\norm{A_s}^2+2\norm{B_s}^2,
\]
and hence
\[
    \E_P\int_0^{T-\delta}\norm{\Delta_s}^2\,ds
    \le
    2\mathcal E_{\mathrm{disc}}+2\mathcal E_{\mathrm{apx}}^{M}.
\]
Appendix~\ref{app:pathwise-kl} shows that
\[
    \mathcal E_{\mathrm{apx}}^{M}
    \le
    \mathcal E_{\mathrm{apx}}
    =
    \sum_{k=1}^K
    (\gamma_k-\gamma_{k-1})L_{x_0}(\gamma_{k-1})
    \le e_{\mathrm{apx}}.
\]
and the bound proved below shows that \(\mathcal E_{\mathrm{disc}}<\infty\).
Therefore
\[
    \E_P\int_0^{T-\delta}\norm{\Delta_s}^2\,ds<\infty,
\]
so the above pathwise-KL bound applies.

Finally, Appendix~\ref{app:pathwise-kl} shows that the energy decomposes as
\[
    \E_P\int_0^{T-\delta}\norm{\Delta_s}^2\,ds
    =
    \mathcal E_{\mathrm{disc}}+\mathcal E_{\mathrm{apx}}^{M},
\]
and that
\[
    \mathcal E_{\mathrm{apx}}^{M}
    \le
    \mathcal E_{\mathrm{apx}}
    =
    \sum_{k=1}^K
    (\gamma_k-\gamma_{k-1})L_{x_0}(\gamma_{k-1})
    \le e_{\mathrm{apx}}.
\]
Thus
\[
    \KL(P\,\|\,\widetilde P)
    \le
    \frac12
    \bigl(\mathcal E_{\mathrm{disc}}+\mathcal E_{\mathrm{apx}}^{M}\bigr)
    \le
    \frac12\mathcal E_{\mathrm{disc}}+\frac12 e_{\mathrm{apx}}.
\]
It remains to control \(\mathcal E_{\mathrm{disc}}\).

Define the regularized SNR variable
\[
    \eta:=\frac1{t+\eps^2}.
\]
For the latent channel $X_t=U+\sqrt{t+\eps^2}G$, define
\[
    \mmse(\eta)
    :=
    \E\left[
    \norm{
    U-\E[U\mid U+\eta^{-1/2}G]
    }^2
    \right].
\]





We now choose a grid adapted to the two regimes of the MMSE envelope. Work in the regularized SNR variable
\[
    \eta:=\frac{1}{t+\eps^2},
    \qquad
    \eta_{\min}:=\frac{1}{T+\eps^2},
    \qquad
    \eta_{\max}:=\frac{1}{\delta+\eps^2}.
\]
By Lemma~\ref{lem:entropy-mmse},
\[
    \mmse(\eta)\le \min\left\{R,\frac{2H(J)}{\eta}\right\}.
\]
The two bounds meet at
\[
    \alpha:=\frac{2H(J)}{R}.
\]
Accordingly, the grid is chosen to be uniform in $\eta$ below $\alpha$ and geometric in $\eta$ above $\alpha$. More precisely, for a parameter $h\in(0,\alpha]$, we use increments at most $h$ on $[\eta_{\min},\alpha]$ and, on $[\alpha,\eta_{\max}]$, a geometric ratio
\[
    r:=1+\frac{h}{\alpha}.
\]
This choice makes the first geometric increment equal to $h$, matching the scale of the uniform part.

\begin{lemma}
\label{lem:hybrid-grid-bound}
Fix $h\in(0,\alpha]$. There exists a grid
\[
    \eta_{\min}=\eta_0<\eta_1<\cdots<\eta_N=\eta_{\max}
\]
such that
\[
    N
    \le
    \left\lceil \frac{(\alpha-\eta_{\min})_+}{h}\right\rceil
    +
    \left\lceil
    \frac{\log_+(\eta_{\max}/\alpha)}
    {\log(1+h/\alpha)}
    \right\rceil
    +2
\]
and
\[
    \mathcal E_{\mathrm{disc}}
    \le
    hR
    \left(
    2+\log_+\frac{\eta_{\max}}{\alpha}
    \right).
\]
\end{lemma}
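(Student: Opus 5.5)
Throughout I use that $\mmse$ is nonincreasing in $\eta$, together with the envelope of Lemma~\ref{lem:entropy-mmse}. By Lemma~\ref{lem:mmse-representation}, each interval contributes
\[
\int_{\eta_{k-1}}^{\eta_k}\bigl(\mmse(\eta_{k-1})-\mmse(\eta)\bigr)\,d\eta\le(\eta_k-\eta_{k-1})\,\mmse(\eta_{k-1})\le(\eta_k-\eta_{k-1})\min\Bigl\{R,\frac{2H(J)}{\eta_{k-1}}\Bigr\},
\]
because $\mmse\ge0$. The plan is to build the grid in two pieces, count its points, and sum these per-interval bounds.

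\textbf{Construction.} If $\eta_{\min}<\alpha$, I place the points $\eta_{\min}+ih$ for $i=0,1,\dots$ while they are below $\alpha$, and then insert $\alpha$ itself. This uses at most $\lceil(\alpha-\eta_{\min})_+/h\rceil$ intervals, each of length at most $h$. Above $\max\{\alpha,\eta_{\min}\}$ I use geometric points $\alpha r^j$ with $r=1+h/\alpha$, stopping at $\eta_{\max}$ and clipping the last point to equal $\eta_{\max}$. This uses at most $\lceil\log_+(\eta_{\max}/\alpha)/\log r\rceil$ intervals. The $+2$ in the count absorbs the inserted endpoints and clipping, and also the degenerate orderings ($\eta_{\min}>\alpha$, or $\eta_{\max}<\alpha$), where one simply intersects both pieces with $[\eta_{\min},\eta_{\max}]$.

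\textbf{Uniform part.} Each interval contributes at most $hR$, and the interval lengths sum to at most $(\alpha-\eta_{\min})_+\le\alpha$. So the total here is bounded by $R\cdot(\text{total length})\le R\alpha$. This is not of the form $hR$, so the crude bound by total length is too weak; I need to use the oscillation structure instead. Because $\mmse$ is monotone, the sum over a partition of $\int(\mmse(\eta_{k-1})-\mmse(\eta))$ with mesh at most $h$ telescopes:
\[
\sum_k\int_{\eta_{k-1}}^{\eta_k}\bigl(\mmse(\eta_{k-1})-\mmse(\eta)\bigr)\,d\eta\le h\sum_k\bigl(\mmse(\eta_{k-1})-\mmse(\eta_k)\bigr)\le h\,\mmse(\eta_{\min})\le hR.
\]
This is the standard Riemann-sum argument for a monotone function.

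\textbf{Geometric part.} I reuse the same telescoping idea, working multiplicatively. On $[\alpha r^{j-1},\alpha r^j]$ the interval length is $\alpha r^{j-1}(r-1)=hr^{j-1}$. Monotonicity gives a contribution of at most $hr^{j-1}\bigl(\mmse(\eta_{j-1})-\mmse(\eta_j)\bigr)$, and bounding term by term gives at most $h\,\eta_{j-1}\mmse(\eta_{j-1})/\alpha\le h\cdot 2H(J)/\alpha=hR$ per interval. Summing over all intervals would then be too large, since it is multiplied by the count $\log_+(\eta_{\max}/\alpha)/\log r\approx(\alpha/h)\,\ell$.

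Instead I use the weighted telescoping via Abel summation:
\[
\sum_j hr^{j-1}(m_{j-1}-m_j)=hm_0+h\sum_{j\ge1}\bigl(r^{j}-r^{j-1}\bigr)m_j-(\text{last term}),
\]
where $m_j=\mmse(\eta_j)$. Now $r^j-r^{j-1}=(h/\alpha)r^{j-1}$, and $m_j\le 2H(J)/(\alpha r^j)=R/r^j$. Hence each summand is at most $h\cdot(h/\alpha)R/r\le hR\,(\eta_j-\eta_{j-1})/\eta_{j}$. The sum of $(\eta_j-\eta_{j-1})/\eta_j$ is at most $\int_\alpha^{\eta_{\max}}d\eta/\eta=\log_+(\eta_{\max}/\alpha)$, and $hm_0\le hR$. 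The clipped last interval and the possible junction interval are handled by the same bounds and contribute at most another $hR$.

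Adding the uniform part ($hR$), the geometric part ($hR+hR\,\ell$) and the boundary pieces gives $hR(2+\ell)$. I expect the bookkeeping at the junction $\alpha$ and the degenerate orderings of $\eta_{\min},\alpha,\eta_{\max}$ to be the fiddly step. If it does not fit under the constant $2$, I will start the telescoping of both parts from a single global $m_0=\mmse(\eta_{\min})\le R$, so that only one $hR$ is spent on the leading term.
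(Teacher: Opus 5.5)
Your proposal is correct and follows essentially the same route as the paper. It uses the same uniform-then-geometric grid with ratio $r=1+h/\alpha$, the monotonicity bound $\mathcal E_{\mathrm{disc}}\le\sum_k\Delta_k(m_{k-1}-m_k)$, Abel summation on the geometric part with the envelope $m_j\le R/r^j$, and $\log r\ge (r-1)/r$ (your $(\eta_j-\eta_{j-1})/\eta_j\le\log(\eta_j/\eta_{j-1})$).

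The one loose spot is your final tally, which lists the boundary pieces as an extra $hR$ on top of $hR(2+\ell)$. Your own fallback fixes this. Keeping $-h\,\mmse(\alpha)$ from the uniform telescope cancels the geometric leading term $h\,\mmse(\alpha)$, and a clipped last interval only decreases the final coefficient difference. So the main case gives $hR(1+\ell)$, and the degenerate case $\eta_{\min}>\alpha$, which has no uniform part, has room for its single junction term $hR$.
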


\begin{proof}
We give a precise construction of the grid and then bound the discretization error by summation by parts. To avoid notational distractions, we first treat the main case
\[
    \eta_{\min}<\alpha<\eta_{\max}.
\]
The cases $\alpha\le \eta_{\min}$ and $\eta_{\max}\le\alpha$ are obtained by removing, respectively, the uniform or the geometric part of the construction.

Fix $h\in(0,\alpha]$ and set
\[
    r:=1+\frac{h}{\alpha}.
\]
Let
\[
    M:=\left\lceil \frac{\alpha-\eta_{\min}}{h}\right\rceil .
\]
We define the uniform part of the grid by
\[
    \eta_i:=\eta_{\min}+ih,
    \qquad
    0\le i\le M-1,
\]
and then set
\[
    \eta_M:=\alpha.
\]
Thus the first $M-1$ increments are equal to $h$, while the last increment in the uniform part is
\[
    \eta_M-\eta_{M-1}
    =
    \alpha-\eta_{\min}-(M-1)h
    \in(0,h].
\]
This last interval may be shorter than $h$, and we will account for this explicitly below.

For the geometric part, let
\[
    Q:=\left\lceil
    \frac{\log(\eta_{\max}/\alpha)}{\log r}
    \right\rceil .
\]
Starting from $\eta_M=\alpha$, define
\[
    \eta_{M+j}:=\alpha r^j,
    \qquad
    1\le j\le Q-1,
\]
and finally set
\[
    \eta_{M+Q}:=\eta_{\max}.
\]
Since $Q$ is the first integer for which $\alpha r^Q\ge \eta_{\max}$, the final geometric interval is possibly shortened, and all previous geometric intervals have the exact multiplicative ratio $r$. The total number of intervals is
\[
    N=M+Q .
\]
Moreover,
\[
    M\le \frac{\alpha-\eta_{\min}}{h}+1
    \le \frac{\alpha}{h}+1
\]
and
\[
    Q\le
    \frac{\log(\eta_{\max}/\alpha)}{\log r}+1 .
\]
Using $\log(1+x)\ge x/(1+x)$ with $x=h/\alpha$, and using $h\le \alpha$, we obtain
\[
    \log r
    =
    \log\left(1+\frac{h}{\alpha}\right)
    \ge
    \frac{h}{\alpha+h}
    \ge
    \frac{h}{2\alpha}.
\]
Hence
\[
    Q
    \le
    \frac{2\alpha}{h}
    \log\frac{\eta_{\max}}{\alpha}
    +1 .
\]
Therefore
\[
    N
    \le
    \frac{\alpha}{h}
    \left(
    1+2\log\frac{\eta_{\max}}{\alpha}
    \right)
    +2 .
\]

We now bound the discretization error. Write
\[
    m_k:=\mmse(\eta_k),
    \qquad
    \Delta_k:=\eta_k-\eta_{k-1}.
\]
By the MMSE representation,
\[
    \mathcal E_{\mathrm{disc}}
    =
    \sum_{k=1}^N
    \int_{\eta_{k-1}}^{\eta_k}
    \bigl(m_{k-1}-\mmse(\eta)\bigr)\,d\eta .
\]
Since $\mmse$ is nonincreasing,
\[
    \mmse(\eta)\ge m_k,
    \qquad
    \eta\in[\eta_{k-1},\eta_k],
\]
and therefore
\[
    \mathcal E_{\mathrm{disc}}
    \le
    \sum_{k=1}^N
    \Delta_k(m_{k-1}-m_k).
\]
The summation-by-parts identity gives
\[
\begin{aligned}
    \sum_{k=1}^N
    \Delta_k(m_{k-1}-m_k)
    &=
    \Delta_1m_0
    +
    \sum_{k=1}^{N-1}
    (\Delta_{k+1}-\Delta_k)m_k
    -
    \Delta_Nm_N .
\end{aligned}
\]
Since the last term is nonpositive, we discard it.

We now control the remaining terms carefully. On the uniform part, the increments are equal to $h$ except possibly for the last interval ending at $\alpha$. Hence the interior differences are zero until the shortened interval. The initial contribution is bounded by
\[
    \Delta_1m_0\le hR,
\]
because $\Delta_1\le h$ and $m_0\le R$.

There is one possible positive jump at the transition from the last uniform interval to the first geometric interval. Indeed, the first geometric interval has length
\[
    \alpha r-\alpha=\alpha(r-1)=h,
\]
while the last uniform interval has length at most $h$. Therefore this transition contributes at most
\[
    h\,\mmse(\alpha)\le hR.
\]
This is the term that must be included explicitly.

It remains to control the increases inside the geometric part. For the nontruncated geometric intervals,
\[
    \eta_{M+j}=\alpha r^j,
    \qquad
    j\ge 0,
\]
and the corresponding increments are
\[
    \Delta_{M+j+1}
    =
    \eta_{M+j+1}-\eta_{M+j}
    =
    \alpha r^j(r-1).
\]
Thus, for consecutive full geometric intervals,
\[
    \Delta_{M+j+2}-\Delta_{M+j+1}
    =
    \alpha r^j(r-1)^2 .
\]
At the endpoint $\eta_{M+j+1}=\alpha r^{j+1}$, the entropy bound on the MMSE gives
\[
    m_{M+j+1}
    \le
    \frac{2H(J)}{\alpha r^{j+1}} .
\]
Therefore each positive increase inside the geometric region is bounded by
\[
\begin{aligned}
    \bigl(\Delta_{M+j+2}-\Delta_{M+j+1}\bigr)m_{M+j+1}
    &\le
    \alpha r^j(r-1)^2
    \frac{2H(J)}{\alpha r^{j+1}}  \\
    &=
    \frac{2H(J)(r-1)^2}{r}.
\end{aligned}
\]
The final geometric interval may be shortened. A shortened final interval cannot create a larger positive increase than the corresponding full geometric interval, so the same bound applies to the last possible increase as well.

The number of positive increases inside the geometric part is at most $Q-1$. Since
\[
    Q-1
    \le
    \frac{\log(\eta_{\max}/\alpha)}{\log r},
\]
the total geometric contribution is at most
\[
    \frac{2H(J)(r-1)^2}{r}
    \frac{\log(\eta_{\max}/\alpha)}{\log r}.
\]
Using
\[
    \log r\ge \frac{r-1}{r},
\]
we get
\[
    \frac{(r-1)^2}{r\log r}
    \le r-1.
\]
Hence the geometric contribution is bounded by
\[
    2H(J)(r-1)
    \log\frac{\eta_{\max}}{\alpha}.
\]
Since $r-1=h/\alpha$ and $\alpha=2H(J)/R$, this becomes
\[
    hR\log\frac{\eta_{\max}}{\alpha}.
\]

Combining the initial uniform contribution, the transition contribution, and the geometric contribution yields
\[
    \mathcal E_{\mathrm{disc}}
    \le
    hR
    \left(
    2+\log\frac{\eta_{\max}}{\alpha}
    \right).
\]
This proves the desired bound in the case $\eta_{\min}<\alpha<\eta_{\max}$.

If $\eta_{\max}\le\alpha$, the geometric part is absent. The same argument gives
\[
    \mathcal E_{\mathrm{disc}}\le hR,
\]
which is stronger than the displayed bound with the logarithmic term removed. If $\alpha\le\eta_{\min}$, the uniform part is absent and the same geometric argument gives
\[
    \mathcal E_{\mathrm{disc}}
    \le
    hR
    \log\frac{\eta_{\max}}{\eta_{\min}}
    \le
    hR
    \log\frac{\eta_{\max}}{\alpha},
\]
using $\eta_{\min}\ge\alpha$. Thus, in all cases, the hybrid-grid estimate holds after replacing the logarithm by $\log_+(\eta_{\max}/\alpha)$ and allowing the universal constant in front of the nonlogarithmic term.
\end{proof}

We now convert the $h$-dependent estimate into a $K$-step estimate. Set
\[
    \ell:=\log_+\frac{\eta_{\max}}{\alpha},
    \qquad
    L:=2+\ell .
\]
The proof of Lemma~\ref{lem:hybrid-grid-bound} gives, for every $h\in(0,\alpha]$, a hybrid grid with
\[
    \mathcal E_{\mathrm{disc}}
    \le
    hR L
\]
and with number of intervals bounded by
\[
    N
    \le
    \frac{\alpha}{h}(1+2\ell)+2 .
\]
Since $1+2\ell\le 2(2+\ell)=2L$, we have
\[
    N
    \le
    \frac{2\alpha L}{h}+2 .
\]

Assume now that
\[
    K\ge 4L .
\]
Choose
\[
    h:=\frac{4\alpha L}{K}.
\]
Then $h\le \alpha$, so the construction in Lemma~\ref{lem:hybrid-grid-bound} applies. Moreover,
\[
    N
    \le
    \frac{2\alpha L}{h}+2
    =
    \frac{K}{2}+2
    \le K,
\]
where the last inequality follows from $K\ge4$. Thus the constructed grid uses at most $K$ intervals. If it uses fewer than $K$ intervals, we refine the grid by adding arbitrary intermediate points. This cannot increase $\mathcal E_{\mathrm{disc}}$, since splitting an interval only replaces one left-endpoint MMSE area by the sum of two smaller such areas. Hence there exists an exact $K$-step grid with the same upper bound.

Substituting the chosen value of $h$ into the discretization estimate gives
\[
    \mathcal E_{\mathrm{disc}}
    \le
    hRL
    =
    \frac{4\alpha R}{K}L^2 .
\]
Since
\[
    \alpha=\frac{2H(J)}{R},
\]
we obtain
\[
    \mathcal E_{\mathrm{disc}}
    \le
    \frac{8H(J)}{K}
    \left(
    2+\log_+\frac{\eta_{\max}}{\alpha}
    \right)^2 .
\]
Equivalently,
\[
    \mathcal E_{\mathrm{disc}}
    \le
    \frac{8H(J)}{K}
    \left(
    2+\log_+\frac{R\,\eta_{\max}}{2H(J)}
    \right)^2 .
\]

Combining this estimate with the KL decomposition and Assumption~\ref{ass:approx-error} yields
\[
    \KL(P_\delta\,\|\,\widehat P_{T-\delta})
    \le
    \frac{R}{2T}
    +
    \frac{4H(J)}{K}
    \left(
    2+\log_+\frac{R\,\eta_{\max}}{2H(J)}
    \right)^2
    +
    \frac12 e_{\mathrm{apx}} .
\]
This is the finite-step form of the almost dimension-free bound. In particular, the discretization contribution to the KL error scales as
\[
    \frac{H(J)}{K}
    \left(
    2+\log_+\frac{R\,\eta_{\max}}{H(J)}
    \right)^2
\]
up to a universal numerical constant.

\section{Deferred proof of the pathwise decomposition}
\label{app:pathwise-kl}

It remains only to justify the orthogonality of the two error terms and the
\(x_0\)-prediction representation of \(\mathcal E_{\mathrm{apx}}\). We write
the reverse-time variable as
\[
    s:=T-t,
\]
and define the increasing filtration
\[
    \mathcal G_s
    :=
    \mathcal F_{T-s}
    =
    \sigma(X_u:T-s\le u\le T).
\]
Thus \((\mathcal G_s)_{0\le s\le T-\delta}\) records the information revealed
as the reverse process moves from time \(T\) down to time \(\delta\).

Since
\[
    M_t(X_t)=\E[U\mid \mathcal F_t],
\]
the process
\[
    N_s:=M_{T-s}(X_{T-s})
\]
is a Doob martingale with respect to the increasing filtration
\((\mathcal G_s)_{s\ge0}\). Let \(s_k:=T-t_k\). For
\(s\in(s_{k-1},s_k]\), the discretization error is the martingale increment
\[
    A_s
    =
    M_{T-s}(X_{T-s})-M_{t_{k-1}}(X_{t_{k-1}})
    =
    N_s-N_{s_{k-1}} .
\]
Hence
\[
    \E[A_s\mid \mathcal G_{s_{k-1}}]=0.
\]
On the other hand, the approximation error term \(B_s\) is evaluated at the
left endpoint \(t_{k-1}\), and is therefore
\[
    \mathcal G_{s_{k-1}}\text{-measurable}.
\]
Consequently,
\[
    \E\langle A_s,B_s\rangle
    =
    \E\left[
        \left\langle
            \E[A_s\mid \mathcal G_{s_{k-1}}],
            B_s
        \right\rangle
    \right]
    =
    0.
\]
Integrating over \(s\) yields
\[
    \E\int_0^{T-\delta}\langle A_s,B_s\rangle\,ds=0.
\]
and thus
\[
    \E_P\int_0^{T-\delta}\norm{\Delta_s}^2\,ds
    =
    \mathcal E_{\mathrm{disc}}+\mathcal E_{\mathrm{apx}}^{M}.
\]

For the approximation term, the pathwise decomposition first gives
\[
\begin{aligned}
    \mathcal E_{\mathrm{apx}}^{M}
    &:=
    \sum_{k=1}^K
    \int_{s_{k-1}}^{s_k}
    \frac{
    \E\!\left[
    \norm{
    M_{t_{k-1}}(Y_{s_{k-1}})
    -
    \widehat M_{t_{k-1}}(Y_{s_{k-1}})
    }^2
    \right]
    }{(T-s+\eps^2)^2}\,ds .
\end{aligned}
\]
We now relate this quantity to the \(x_0\)-prediction loss. Since
\[
    M_t(x)=x+(t+\eps^2)\nabla\log p_t(x),
    \qquad
    m_t(x)=x+t\nabla\log p_t(x),
\]
and
\[
    \widehat M_t(x)=x+(t+\eps^2)\widehat s_t(x),
    \qquad
    \widehat m_t(x)=x+t\widehat s_t(x),
\]
we have, for every \(t>0\),
\[
    M_t(x)-\widehat M_t(x)
    =
    \frac{t+\eps^2}{t}
    \bigl(m_t(x)-\widehat m_t(x)\bigr).
\]
Using \(Y_{s_{k-1}}\sim X_{t_{k-1}}\) under the exact reverse process,
\[
\begin{aligned}
    \mathcal E_{\mathrm{apx}}^{M}
    &=
    \sum_{k=1}^K
    \left(
    \frac{t_{k-1}+\eps^2}{t_{k-1}}
    \right)^2
    L_{x_0}(\gamma_{k-1})
    \left(
    \frac{1}{t_k+\eps^2}
    -
    \frac{1}{t_{k-1}+\eps^2}
    \right).
\end{aligned}
\]
Since \(t_k\le t_{k-1}\), we claim that
\[
    \left(
    \frac{t_{k-1}+\eps^2}{t_{k-1}}
    \right)^2
    \left(
    \frac{1}{t_k+\eps^2}
    -
    \frac{1}{t_{k-1}+\eps^2}
    \right)
    \le
    \frac{1}{t_k}-\frac{1}{t_{k-1}} .
\]
Indeed, the left-hand side can be rewritten as
\[
\begin{aligned}
    &
    \left(
    \frac{t_{k-1}+\eps^2}{t_{k-1}}
    \right)^2
    \left(
    \frac{1}{t_k+\eps^2}
    -
    \frac{1}{t_{k-1}+\eps^2}
    \right)  \\
    &\qquad =
    \left(
    \frac{t_{k-1}+\eps^2}{t_{k-1}}
    \right)^2
    \frac{t_{k-1}-t_k}
    {(t_k+\eps^2)(t_{k-1}+\eps^2)}  \\
    &\qquad =
    \frac{
    (t_{k-1}-t_k)(t_{k-1}+\eps^2)
    }{
    t_{k-1}^2(t_k+\eps^2)
    } .
\end{aligned}
\]
On the other hand,
\[
    \frac{1}{t_k}-\frac{1}{t_{k-1}}
    =
    \frac{t_{k-1}-t_k}{t_kt_{k-1}} .
\]
Therefore the desired inequality is equivalent, after cancelling the
nonnegative factor \(t_{k-1}-t_k\), to
\[
    \frac{t_{k-1}+\eps^2}
    {t_{k-1}^2(t_k+\eps^2)}
    \le
    \frac{1}{t_kt_{k-1}} .
\]
Multiplying by the positive quantity
\(t_k t_{k-1}^2(t_k+\eps^2)\), this is equivalent to
\[
    t_k(t_{k-1}+\eps^2)
    \le
    t_{k-1}(t_k+\eps^2).
\]
After expanding and cancelling \(t_kt_{k-1}\), this becomes
\[
    t_k\eps^2\le t_{k-1}\eps^2,
\]
which follows from \(t_k\le t_{k-1}\) and \(\eps^2\ge0\). Hence
\[
    \left(
    \frac{t_{k-1}+\eps^2}{t_{k-1}}
    \right)^2
    \left(
    \frac{1}{t_k+\eps^2}
    -
    \frac{1}{t_{k-1}+\eps^2}
    \right)
    \le
    \frac{1}{t_k}-\frac{1}{t_{k-1}}
    =
    \gamma_k-\gamma_{k-1}.
\]
Therefore,
\[
    \mathcal E_{\mathrm{apx}}^{M}
    \le
    \sum_{k=1}^K
    (\gamma_k-\gamma_{k-1})L_{x_0}(\gamma_{k-1})
    =
    \mathcal E_{\mathrm{apx}}
    \le e_{\mathrm{apx}},
\]
where the last inequality follows from Assumption~\ref{ass:approx-error}.

\section{Proof of Lemma~\ref{lem:mmse-representation}}
\label{app:lemma1proof}
\begin{proof}
We give a slightly more explicit construction of the Gaussian channel in the
regularized SNR variable. Write
\[
    \eta=\frac{1}{t+\eps^2}.
\]
Equivalently, the forward observation
\[
    X_t=U+\sqrt{t+\eps^2}\,G
\]
has the same conditional law, given \(U\), as
\[
    U+\eta^{-1/2}G .
\]
Thus define
\[
    M_\eta
    :=
    \E\!\left[U\mid U+\eta^{-1/2}G\right],
    \qquad
    \mmse(\eta)
    :=
    \E\!\left[\norm{U-M_\eta}^2\right].
\]
With this notation,
\[
    M_t(X_t)=M_\eta,
    \qquad
    \eta=(t+\eps^2)^{-1}.
\]

To justify the martingale identity used below, it is convenient to realize the
Gaussian channel on a common probability space. Let \((W_\eta)_{\eta\ge0}\) be
a standard Brownian motion in \(\mathbb R^d\), independent of \(U\), and define
\[
    Y_\eta:=\eta U+W_\eta .
\]
Then
\[
    \frac{Y_\eta}{\eta}
    =
    U+\eta^{-1}\,W_\eta
    \stackrel{d}{=}
    U+\eta^{-1/2}G .
\]
Moreover, for fixed \(\eta\), \(Y_\eta\) is a sufficient statistic for \(U\)
among the observations \(\{Y_\lambda:0\le\lambda\le\eta\}\). Indeed, conditional
on \(U=u\), the likelihood of the path up to time \(\eta\) depends on \(u\)
only through
\[
    \exp\!\left(
        \ip{u}{Y_\eta}
        -
        \frac{\eta}{2}\norm{u}^2
    \right).
\]
Hence, if
\[
    \mathcal F_\eta:=\sigma(Y_\lambda:0\le\lambda\le\eta),
\]
then
\[
    \E[U\mid \mathcal F_\eta]
    =
    \E[U\mid Y_\eta]
    =
    \E\!\left[U\mid U+\eta^{-1/2}G\right]
    =
    M_\eta .
\]
Therefore \((M_\eta)_{\eta\ge0}\) is a square-integrable martingale with
respect to the increasing filtration \((\mathcal F_\eta)_{\eta\ge0}\).

Now fix \(\eta\ge\eta_{k-1}\). Since \(M_{\eta_{k-1}}
=\E[U\mid\mathcal F_{\eta_{k-1}}]\) and
\(M_\eta=\E[U\mid\mathcal F_\eta]\), the martingale projection identity gives
\[
    \E\!\left[
        \norm{M_\eta-M_{\eta_{k-1}}}^2
    \right]
    =
    \E\norm{M_\eta}^2-\E\norm{M_{\eta_{k-1}}}^2 .
\]
On the other hand, by the orthogonality principle for conditional expectation,
\[
    \mmse(\eta)
    =
    \E\norm{U-M_\eta}^2
    =
    \E\norm{U}^2-\E\norm{M_\eta}^2 .
\]
Therefore
\[
\begin{aligned}
    \E\!\left[
        \norm{M_\eta-M_{\eta_{k-1}}}^2
    \right]
    &=
    \E\norm{M_\eta}^2-\E\norm{M_{\eta_{k-1}}}^2  \\
    &=
    \mmse(\eta_{k-1})-\mmse(\eta).
\end{aligned}
\]

We now apply this identity to the discretization term. On the interval
\(s\in(s_{k-1},s_k]\), set
\[
    t=T-s,
    \qquad
    \eta=\frac{1}{t+\eps^2}.
\]
Since \(t_k=T-s_k\), we have
\[
    \eta_k=\frac{1}{t_k+\eps^2}.
\]
Also,
\[
    d\eta
    =
    \frac{ds}{(T-s+\eps^2)^2}.
\]
Thus the contribution of the \(k\)-th interval is
\[
\begin{aligned}
    &\int_{s_{k-1}}^{s_k}
    \frac{
    \E\!\left[
    \norm{
    M_{T-s}(Y_s)-M_{t_{k-1}}(Y_{s_{k-1}})
    }^2
    \right]
    }{(T-s+\eps^2)^2}\,ds  \\
    &\qquad =
    \int_{\eta_{k-1}}^{\eta_k}
    \E\!\left[
    \norm{
    M_\eta-M_{\eta_{k-1}}
    }^2
    \right]\,d\eta  \\
    &\qquad =
    \int_{\eta_{k-1}}^{\eta_k}
    \left(
    \mmse(\eta_{k-1})-\mmse(\eta)
    \right)\,d\eta .
\end{aligned}
\]
Summing over \(k=1,\dots,K\) gives
\[
    \mathcal E_{\mathrm{disc}}
    =
    \sum_{k=1}^K
    \int_{\eta_{k-1}}^{\eta_k}
    \left(
    \mmse(\eta_{k-1})-\mmse(\eta)
    \right)\,d\eta,
\]
which proves the lemma.
\end{proof}

\end{document}